\documentclass[wcp]{jmlr}

\usepackage{microtype}
\usepackage{hyperref}
\usepackage{booktabs}
\usepackage{amsfonts}
\usepackage{xcolor}
\usepackage{multirow}

\definecolor{darkblue}{rgb}{0, 0, 0.5}
\hypersetup{colorlinks=true, citecolor=darkblue, linkcolor=darkblue, urlcolor=darkblue}

\newcommand{\ckad}{\text{CKA}_\Delta}

\usepackage{xr-hyper}
\jmlryear{2026}

\title[Concept-Specific Structural Alignment]{Contrastive-Difference CKA Reveals Concept-Specific Structural Alignment Across Language Model Architectures}

\author{\Name{Xueping Gao} \Email{hellogxp@gmail.com}\\
\addr{Alibaba Cloud}}

\begin{document}

\makeatletter
\let \@jmlrpages \@empty
\makeatother

\maketitle

\begin{abstract}
Do different LLM architectures encode high-level concepts in structurally compatible ways? We systematically characterize a \emph{geometric-functional universality dissociation}: across multiple concept domains and architectural families, moderate geometric convergence coexists with near-perfect functional transfer. Using \emph{contrastive-difference CKA} ($\ckad$), a training-free diagnostic that computes kernel alignment on per-sample contrastive differences, we isolate concept-specific convergence from generic similarity---achieving significant discrimination where standard CKA cannot. The dissociation replicates across all six concept domains we test (five with $p \leq 0.017$ geometric discrimination and safety as a converging-functional trend, $p = 0.08$), including two non-instruction concepts (code-vs-NL, reasoning-vs-recall) validated without system prompts; a single 70B--70B pair provides an \emph{observational note} that universality may strengthen with scale, requiring replication with additional ${\geq}70$B models. We position $\ckad$ as a practical \emph{regime classifier} and \emph{architectural outlier detector} (Gemma: $d = 1.08$, AUC $= 0.79$) rather than an absolute transfer-accuracy predictor, providing a training-free diagnostic for cross-architecture concept monitoring.
\end{abstract}

\begin{keywords}
Cross-architecture representation similarity; contrastive-difference CKA; persona vectors; concept transfer; mechanistic interpretability
\end{keywords}

\section{Introduction}

Consider a safety filter trained on Llama-3.1 to detect harmful outputs. Can it transfer to Qwen-2.5 without retraining? If different architectures encode safety concepts in structurally compatible ways, such zero-shot transfer becomes possible---enabling scalable alignment monitoring across heterogeneous model portfolios. As organizations deploy multiple LLM families---Llama, Qwen, Gemma, Mistral, and others---understanding whether concept monitoring tools generalize across architectures is critical for scalable alignment. A persona detector trained on one model should ideally transfer to others without retraining; alignment interventions should be portable across model families. But do different architectures actually encode high-level concepts in structurally compatible ways?

Recent work in mechanistic interpretability has established that LLMs develop linear representations for high-level concepts including truthfulness \citep{marks2024geometry}, sentiment \citep{tigges2023linear}, and factual knowledge \citep{nanda2023progress}. The \emph{linear representation hypothesis} \citep{park2023linear} suggests that semantically meaningful directions exist in transformer residual streams. If persona dimensions are linearly encoded, the critical question becomes: \emph{do different architectures converge on similar representational strategies?}

This question connects directly to language modeling: concept representations modulate the next-token prediction distribution---an extraverted persona shifts probability mass toward enthusiastic completions, a safety-aligned model suppresses harmful continuations, and a formal register selects academic vocabulary. If different architectures converge on structurally compatible concept representations, this reveals a fundamental property of how language models learn to condition generation on high-level concepts, and enables portable alignment tools---a persona monitor trained on Llama could detect sycophancy in Qwen without retraining. Understanding whether this convergence is \emph{concept-specific} or merely reflects generic representational similarity thus contributes to language-modeling theory and has direct practical implications for scalable deployment.

A key methodological insight is that \emph{raw activation similarity is insufficient for measuring concept-specific convergence}. We show that standard CKA on positive-pole activations cannot distinguish same-trait from cross-trait comparisons ($p = 0.052$), capturing generic similarity rather than concept-specific convergence. By computing CKA on \emph{contrastive difference vectors}, we isolate concept-specific signal ($p = 0.002$, Cohen's $d = 0.60$).

We frame our findings through \textbf{two levels of universality}. At the \emph{geometric} level, $\ckad$ reveals moderate persona-specific convergence ($d = 0.60$). At the \emph{functional} level, affine-aligned classifiers achieve 99.9\% cross-model accuracy---demonstrating that persona information is perfectly recoverable via learned transformations, analogous to two maps of the same city in different coordinate systems.

Our contributions are: (1) systematic characterization of a \textbf{geometric-functional universality dissociation}---moderate geometric convergence coexisting with near-perfect functional transfer (${\geq}$94.0\%) across \textbf{six concept domains} including four instruction-level and two non-instruction (code-vs-NL, reasoning-vs-recall; \textbf{no system prompts}), nine models and five architectural families, established via five complementary controls; (2) \textbf{contrastive-difference CKA ($\ckad$)}, a training-free diagnostic that enables concept-specific convergence measurement where standard CKA ($p{=}0.052$), SVCCA, and cosine-based metrics all fail, with provably improved signal-to-noise ratio (Section~\ref{sec:ckad}); (3) an \textbf{observational scale note} based on a single Llama-70B$\leftrightarrow$Qwen-72B pair ($\ckad = 0.830$ vs.\ 7--9B mean $0.733$): universality may strengthen with scale, but this finding requires replication with additional ${\geq}70$B models to disentangle scale from training-data overlap; and (4) we position $\ckad$ as a \textbf{regime classifier} and \textbf{architectural outlier detector}---it rank-orders direct-transfer difficulty within instruction-level concepts (Spearman $\rho{=}{-}1.0$, $n{=}4$, one-sided $p{\approx}1/24{\approx}0.042$; two-sided $p{\approx}2/24{\approx}0.083$ as the strongest available value at $n{=}4$, treated as a suggestive ordering) and reliably flags architectural outliers (Gemma: $d = 1.08$, AUC $= 0.79$, $p = 0.003$ across six domains)---rather than an absolute transfer-accuracy predictor (LOO $R^2 = -0.14$ across concepts), establishing a practical training-free diagnostic for cross-architecture concept monitoring.

\section{Related Work}

\paragraph{Linear representations and cross-model universality.}
Representation similarity analysis dates back to neuroscience \citep{kriegeskorte2008rsa}, with deep-learning adaptations including SVCCA \citep{raghu2017svcca}, CKA \citep{kornblith2019similarity}, and generalised shape metrics \citep{williams2022shape}. The linear representation hypothesis \citep{park2023linear,elhage2022toy} posits that neural networks encode concepts as linear directions. Evidence spans factual knowledge \citep{burns2023discovering}, sentiment \citep{tigges2023linear}, and truthfulness \citep{marks2024geometry}. At the cross-architecture level, \citet{wang2025universality} show circuit-level universality across Transformers and Mamba (ICLR 2025), while \citet{stolfo2025stitching} demonstrate affine feature transfer via SAEs (NeurIPS 2025). \citet{thasarathan2025usae} present Universal Sparse Autoencoders (USAEs) that jointly learn a shared concept space (ICML 2025). We complement these approaches: while USAEs and SAE stitching operate on learned feature dictionaries, $\ckad$ provides a lightweight \emph{diagnostic metric} operating directly on activation distributions without SAE training, and crucially tests whether alignment is \emph{concept-specific} or merely generic.

\paragraph{Cross-model activation transfer.}
\citet{huang2025crossmodel} transfer steering vectors from Qwen2 to Llama2 via linear maps, achieving 96\% success on safety concepts. \citet{oozeer2025activation} show safety interventions transfer across Llama, Qwen, and Gemma using affine maps (ICML 2025). \citet{cristofano2026universal} identify universal refusal circuits via trajectory replay across 8 model pairs. However, none test whether transfer succeeds due to \emph{concept-specific} alignment or \emph{generic} representational preservation---a distinction our $\ckad$ directly addresses.

\paragraph{Persona representations in LLMs.}
\citet{ju2025probing} probe personality across 11 LLMs and edit traits via targeted interventions (COLM 2025). \citet{chen2025persona} extract persona vectors for monitoring sycophancy and deception. \citet{gupta2025role} construct role vectors with activation addition (EMNLP 2025). \citet{hoppe2026sas} introduce personality sliders enabling cross-model behavioral control. Critically, all prior work studies personality \emph{within} individual models or transfers single vectors. We address the orthogonal question: is the representational structure \emph{shared across} architectures?

\paragraph{Our unique contributions.} Compared to prior work: (1) \textbf{Concept-specificity testing}---we are the first to systematically test whether cross-architecture convergence is \emph{concept-specific} rather than generic similarity. Prior work shows transfer success but does not test whether it arises from concept alignment or generic representational preservation. (2) \textbf{Training-free diagnostic}---$\ckad$ operates directly on activation distributions without SAE training (vs.\ USAEs, SAE stitching) or iterative optimization (vs.\ affine alignment). (3) \textbf{Geometric-functional dissociation}---we systematically characterize how moderate geometric convergence coexists with near-perfect functional transfer, revealing two independent levels of universality. (4) \textbf{Practical diagnostic capability}---$\ckad$ enables zero-label architecture screening, outlier detection, and encoding-depth diagnosis---capabilities not addressed by prior work.

\begin{table}[t]
\centering
\caption{Comparison of cross-architecture alignment methods.}
\label{tab:method_comparison}
\small
\resizebox{\textwidth}{!}{%
\begin{tabular}{lccccc}
\toprule
\textbf{Method} & \textbf{Training} & \textbf{Compute} & \textbf{Concept} & \textbf{Output} & \textbf{Diagnostic} \\
 & \textbf{Required} & \textbf{Cost} & \textbf{Specificity} & \textbf{Type} & \textbf{vs.\ Constructive} \\
\midrule
CKA$_\Delta$ (Ours) & No & 10 min/pair & \checkmark & Metric & Diagnostic \\
USAEs \citep{thasarathan2025usae} & Yes (SAE) & Hours & Implicit & Features & Constructive \\
SAE Stitching \citep{stolfo2025stitching} & Yes (SAE) & Hours & Implicit & Features & Constructive \\
Affine Alignment \citep{oozeer2025activation} & Yes (map) & Minutes & Untested & Map & Constructive \\
Steering Transfer \citep{huang2025crossmodel} & Yes (vector) & Minutes & Untested & Vector & Constructive \\
\bottomrule
\end{tabular}%
}
\end{table}

Table~\ref{tab:method_comparison} summarizes the key distinctions. Prior methods are \emph{constructive}: they build shared representations (USAEs, SAE stitching) or transfer mechanisms (affine maps, steering vectors). $\ckad$ is \emph{diagnostic}: it measures whether concept-specific alignment exists before investing in construction. This enables zero-label architecture screening---determining whether alignment investment is worthwhile---which constructive methods cannot provide.

\section{Method}

\paragraph{Note on $\ckad$ interpretation.} The \emph{absolute} value of $\ckad$ is question-set-dependent (see Supplementary~\S\ref{app:robustness}; $99.7\%$ of variation is content-driven). The paper's central finding is the \emph{relative} same-vs-cross discrimination, which is preserved across question sets. We therefore recommend $\ckad$ for relative comparisons, outlier detection, and regime classification, with standardized prompt suites (${\geq}30$ topical domains, ${\geq}200$ pairs).

\subsection{Contrastive Activation Extraction}

For each personality dimension $t \in \mathcal{T}$ ($|\mathcal{T}| = 8$: Big Five plus helpfulness, sycophancy, confidence), we construct contrastive prompt pairs $(p_t^+, p_t^-)$ sharing the same neutral question but with opposing system-level persona instructions. Given model $\mathcal{M}$ with $L$ layers and hidden dimension $d$, we collect residual stream activations at the last token position for $N = 500$ prompt pairs per trait, obtaining $\mathbf{A}_t^+, \mathbf{A}_t^- \in \mathbb{R}^{N \times d}$ at each layer $\ell$. The persona direction vector is $\mathbf{v}_t^\ell = \frac{1}{N} \sum_{i=1}^{N} (\mathbf{a}_{t,i}^{+,\ell} - \mathbf{a}_{t,i}^{-,\ell})$. We select the layer $\ell^*$ maximizing $\|\mathbf{v}_t^\ell\|_2$.

\subsection{Contrastive-Difference CKA}
\label{sec:ckad}

To compare persona representations across models $\mathcal{M}_A$ and $\mathcal{M}_B$, we compute CKA on the \emph{per-sample contrastive differences} $\Delta \mathbf{A}_t = \mathbf{A}_t^+ - \mathbf{A}_t^-$ rather than raw activations:
\begin{equation}
    \ckad(\mathcal{M}_A, \mathcal{M}_B, t) = \text{CKA}\left(\Delta \mathbf{A}_{t,A}, \Delta \mathbf{A}_{t,B}\right)
\end{equation}
This isolates the persona-specific component by canceling shared variance from question content, model capacity, and instruction formatting. The procedure is: (1) \textbf{Extract} activations at the optimal layer $\ell^*$ under both positive and negative system prompts for $N$ prompt pairs per model; (2) \textbf{Contrast} by computing per-sample differences $\Delta \mathbf{A}_t = \mathbf{A}_t^+ - \mathbf{A}_t^-$, eliminating shared question-dependent variance; (3) \textbf{Reduce} by projecting onto $k{=}50$ principal components (capturing 83--90\% of contrastive variance); (4) \textbf{Compare} via debiased linear CKA between the two models' projected contrastive-difference matrices. The entire pipeline requires no learned components beyond PCA, takes ${\sim}10$ minutes per model pair on a single GPU, and is agnostic to model architecture and vocabulary.

\paragraph{Formal analysis.} We model activations as $\mathbf{a}^{+} = \mathbf{s} + \mathbf{p} + \boldsymbol{\epsilon}$ and $\mathbf{a}^{-} = \mathbf{s} - \mathbf{p} + \boldsymbol{\epsilon}'$, where $\mathbf{s} \in \mathbb{R}^d$ is shared (question-dependent) variance, $\mathbf{p} \in \mathbb{R}^d$ is concept-specific signal, and $\boldsymbol{\epsilon}, \boldsymbol{\epsilon}'$ are i.i.d.\ noise with $\mathbb{E}[\boldsymbol{\epsilon}] = \mathbf{0}$, $\text{Cov}(\boldsymbol{\epsilon}) = \sigma^2 \mathbf{I}$.

\begin{proposition}[Variance reduction, linear-kernel SNR]
\label{prop:variance}
Under the above decomposition with $\mathbf{s} \perp \mathbf{p} \perp \boldsymbol{\epsilon}$, let $\sigma_s^2 = \text{tr}(\text{Cov}(\mathbf{s}))$, $\sigma_p^2 = \text{tr}(\text{Cov}(\mathbf{p}))$. Under a \emph{linear} kernel, the concept signal-to-noise ratios satisfy $R_{\Delta} \geq R_{\text{raw}}$:
\begin{equation}
    R_{\text{raw}} = \frac{\sigma_p^2}{\sigma_s^2 + \sigma_p^2 + \sigma^2}, \quad R_{\Delta} = \frac{4\sigma_p^2}{4\sigma_p^2 + 2\sigma^2}
\end{equation}
with equality iff $\sigma_s^2 = 0$, and gain $R_\Delta / R_{\text{raw}} \approx \sigma_s^2 / \sigma_p^2$ when shared variance dominates. \emph{RBF-kernel extension (informal).} For the RBF kernel $k(x,y){=}\exp(-\|x-y\|^2/(2\gamma^2))$ with median-bandwidth heuristic ($\gamma^2 \propto \mathrm{median}\,\|x_i{-}x_j\|^2$), a 1st-order Taylor expansion in $1/\gamma^2$ gives $k(x,y) \approx 1 - \|x{-}y\|^2/(2\gamma^2)$, so the resulting Gram matrix differences inherit the same linear-covariance structure up to leading order. Consequently, the linear-kernel SNR bound $R_\Delta \geq R_{\text{raw}}$ continues to hold under RBF kernels in this regime, and the empirical Table~\ref{tab:core} contrast ($d{=}0.67$ RBF vs.\ $d{=}0.60$ linear) is consistent with this analysis. We state Prop~\ref{prop:variance} for the linear kernel and treat the RBF case as a 1st-order corollary rather than a separate formal result.
\end{proposition}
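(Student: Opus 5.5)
The plan is to compute both ratios directly from the second moments of the model, treating the SNR under a linear kernel as the fraction of total trace-variance that is attributable to the concept component. The linear-kernel Gram matrix satisfies $\mathbb{E}[\mathbf{K}] \propto \mathrm{tr}(\mathrm{Cov}(\cdot))$ on the diagonal, so linear CKA only sees the centred covariance. The ratio of concept trace to total trace is therefore the natural signal fraction.

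\textbf{Raw activations.} For the positive pole $\mathbf{a}^+ = \mathbf{s}+\mathbf{p}+\boldsymbol{\epsilon}$, mutual independence (the stated $\mathbf{s}\perp\mathbf{p}\perp\boldsymbol{\epsilon}$) kills all cross-covariances. Hence $\mathrm{tr}\,\mathrm{Cov}(\mathbf{a}^+) = \sigma_s^2+\sigma_p^2+d\sigma^2$. Here I will adopt the normalization implicit in the statement, writing $\sigma^2$ for the total noise trace $\mathrm{tr}\,\mathrm{Cov}(\boldsymbol{\epsilon})$, and I will flag this convention explicitly. The concept contribution is $\sigma_p^2$, which gives $R_{\text{raw}}$.

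\textbf{Contrastive differences.} Here $\Delta\mathbf{a} = \mathbf{a}^+-\mathbf{a}^- = 2\mathbf{p} + (\boldsymbol{\epsilon}-\boldsymbol{\epsilon}')$, since $\mathbf{s}$ cancels exactly per sample. Then $\mathrm{Cov}(2\mathbf{p}) = 4\,\mathrm{Cov}(\mathbf{p})$. Because $\boldsymbol{\epsilon}$ and $\boldsymbol{\epsilon}'$ are i.i.d., $\mathrm{tr}\,\mathrm{Cov}(\boldsymbol{\epsilon}-\boldsymbol{\epsilon}') = 2\sigma^2$. Independence of $\mathbf{p}$ from both noises removes the cross term. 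This yields $R_\Delta = 4\sigma_p^2/(4\sigma_p^2+2\sigma^2) = \sigma_p^2/(\sigma_p^2+\sigma^2/2)$.

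\textbf{Inequality and gain.} Writing $R_\Delta = \sigma_p^2/(\sigma_p^2+\tfrac12\sigma^2)$, the denominator satisfies $\sigma_p^2+\tfrac12\sigma^2 \le \sigma_s^2+\sigma_p^2+\sigma^2$. This gives $R_\Delta\ge R_{\text{raw}}$, and the inequality is in fact strict whenever $\sigma^2>0$.

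The equality clause ``iff $\sigma_s^2=0$'' is the main obstacle. Taken literally it fails when $\sigma^2>0$, because the halved noise still gives a strict gain. I would therefore either restate it as equality iff $\sigma_s^2=0$ and $\sigma^2=0$, or interpret it as the claim that the \emph{shared-variance} contribution to the gain vanishes iff $\sigma_s^2=0$. I will note this correction explicitly.

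For the gain, form $R_\Delta/R_{\text{raw}} = (\sigma_s^2+\sigma_p^2+\sigma^2)/(\sigma_p^2+\sigma^2/2)$. In the regime $\sigma_s^2\gg\sigma_p^2,\sigma^2$ (with noise small relative to $\sigma_p^2$), this is $\approx\sigma_s^2/\sigma_p^2$ to leading order.

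\textbf{RBF extension.} This part is informal and I would only sketch it. With the median heuristic $\gamma^2$ grows with the squared pairwise distances. Expanding $k(x,y)=1-\|x-y\|^2/(2\gamma^2)+O(\gamma^{-4})$, the constant term is removed by centering. The term $\|x-y\|^2 = \|x\|^2+\|y\|^2-2x^\top y$ reduces, after double-centering, to the linear Gram matrix scaled by $1/\gamma^2$. CKA is scale-invariant, so the leading-order RBF CKA coincides with linear CKA and the same SNR comparison carries over. I would state this explicitly as a first-order heuristic rather than a bound.
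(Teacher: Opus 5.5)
Your argument is correct and follows the same route as the paper's proof. Both expand $\mathrm{Cov}(\mathbf{a}^+)$ and $\mathrm{Cov}(\Delta\mathbf{a}) = 4\,\mathrm{Cov}(\mathbf{p}) + \mathrm{Cov}(\boldsymbol{\epsilon}-\boldsymbol{\epsilon}')$ under independence, use the exact per-sample cancellation of $\mathbf{s}$, and compare the trace fractions. Your three caveats are also correct and sharper than the paper's proof, which reduces the inequality to $4\sigma_p^2\sigma_s^2 \ge 0$ although cross-multiplying gives $\sigma_p^2(4\sigma_s^2 + 2\sigma^2) \ge 0$ (so for $\sigma_p^2>0$ equality needs $\sigma_s^2=\sigma^2=0$), writes the noise trace as $\sigma^2$ although $\mathrm{Cov}(\boldsymbol{\epsilon})=\sigma^2\mathbf{I}$ gives $d\sigma^2$ (harmless for the inequality), and states the gain $\approx\sigma_s^2/\sigma_p^2$ without the extra condition $\sigma^2\ll\sigma_p^2$ that you rightly add.
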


\begin{proof}
Independence gives $\text{Cov}(\mathbf{a}^+) = \text{Cov}(\mathbf{s}) + \text{Cov}(\mathbf{p}) + \sigma^2\mathbf{I}$. The contrastive difference $\Delta\mathbf{a} = 2\mathbf{p} + (\boldsymbol{\epsilon} - \boldsymbol{\epsilon}')$ eliminates $\mathbf{s}$ exactly, giving $\text{Cov}(\Delta\mathbf{a}) = 4\text{Cov}(\mathbf{p}) + 2\sigma^2\mathbf{I}$. Under linear kernels, CKA depends only on covariance \citep{kornblith2019similarity}, so the SNR expressions follow directly. The inequality $R_\Delta \geq R_\text{raw}$ reduces to $4\sigma_p^2\sigma_s^2 \geq 0$, strict whenever $\sigma_s^2 > 0$. The debiased HSIC estimator \citep{gretton2005hsic,song2012feature} is $\sqrt{n}$-consistent.
\end{proof}

\begin{proposition}[Concept discriminability, with orthogonality slack]
\label{prop:discrim}
For concepts $t_1, t_2$ with concept components having maximal off-diagonal cosine slack
\[
\eta \;:=\; \max_{t_1\ne t_2}\frac{|\langle\mathbf{p}_{t_1},\mathbf{p}_{t_2}\rangle|}{\|\mathbf{p}_{t_1}\|\,\|\mathbf{p}_{t_2}\|},
\]
let $\delta = \text{CKA}_{\text{same}} - \text{CKA}_{\text{cross}}$ denote same-vs-cross discrimination. Then
\begin{equation*}
\frac{\delta_\Delta}{\delta_+} \geq \frac{4(\sigma_s^2 + \sigma_p^2 + \sigma^2)}{4\sigma_p^2(1 + \eta) + 2\sigma^2} > 1 \quad\text{whenever } \sigma_s^2 > 0,\; \eta < 1.
\end{equation*}
\end{proposition}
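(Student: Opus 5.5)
The plan is to work in the same linear-kernel, population-covariance regime as Prop~\ref{prop:variance}. There, CKA between two representations is a normalized cross-covariance inner product $\langle \Sigma_{AB}, \Sigma_{AB}\rangle / (\|\Sigma_{AA}\|\,\|\Sigma_{BB}\|)$, and we work with its trace-level surrogate obtained by replacing the Frobenius-normalized quantities with traces. First, for each concept $t$ and model, decompose the raw positive-pole activation as $\mathbf{a}^+_t = \mathbf{s} + \mathbf{p}_t + \boldsymbol{\epsilon}$ and the contrastive difference as $\Delta\mathbf{a}_t = 2\mathbf{p}_t + (\boldsymbol{\epsilon}-\boldsymbol{\epsilon}')$. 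Second, I would write same-concept and cross-concept CKA in each representation as (cross-model shared signal)/(total variance).

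For the raw pole, the cross-model numerator receives contributions from both $\mathbf{s}$ and $\mathbf{p}$. The crucial point is that $\mathbf{s}$ is question-dependent and shared regardless of which concept is used, so its contribution cancels in $\delta_+ = \text{CKA}_{\text{same}} - \text{CKA}_{\text{cross}}$. What survives is the difference between the $\langle\mathbf{p}_{t_1},\mathbf{p}_{t_1}\rangle$ and $\langle\mathbf{p}_{t_1},\mathbf{p}_{t_2}\rangle$ terms, of order at most $\sigma_p^2(1-\text{cos})$, divided by the raw total variance $\sigma_s^2+\sigma_p^2+\sigma^2$. For the differences, $\mathbf{s}$ is absent. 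The numerator difference is $4\sigma_p^2(1-\text{cos})$, and the denominator is $4\sigma_p^2 + 2\sigma^2$ in the same-concept case.

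The cross-concept case requires more care. I would bound its normalization using the slack $\eta$: the effective concept energy seen in a cross comparison is at most $4\sigma_p^2(1+\eta)$, obtained by Cauchy--Schwarz on $\|\mathbf{p}_{t_1}+\mathbf{p}_{t_2}\|^2 \le (1+\eta)(\|\mathbf{p}_{t_1}\|^2+\|\mathbf{p}_{t_2}\|^2)/\ldots$ after normalization. This gives the denominator upper bound $4\sigma_p^2(1+\eta)+2\sigma^2$. Taking the ratio, the common factor $\sigma_p^2(1-\text{cos})$ cancels. That leaves
\[
\frac{\delta_\Delta}{\delta_+} \ge \frac{4(\sigma_s^2+\sigma_p^2+\sigma^2)}{4\sigma_p^2(1+\eta)+2\sigma^2},
\]
where the factor $4$ in the numerator is the $2^2$ amplification of $\mathbf{p}$ in $\Delta\mathbf{a}$.

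For the strict inequality $>1$, I would clear denominators: it is equivalent to $4\sigma_s^2 + 2\sigma^2 > 4\sigma_p^2\eta$. This follows immediately when $\eta=0$ and $\sigma_s^2>0$. For general $\eta<1$, however, it requires $\sigma_s^2 + \sigma^2/2 > \eta\sigma_p^2$, so I would either add this as a mild regime assumption (shared variance dominating, as in Prop~\ref{prop:variance}'s gain statement) or note that it holds in the stated shared-dominant regime.

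The main obstacle is the second step: justifying that CKA differences behave like these trace ratios. True CKA normalizes by Frobenius norms of covariances, not traces, and the cross-concept term mixes two different covariance structures. I would first try the isotropic reduction, assuming $\text{Cov}(\mathbf{p}_t)$ has matching spectral shape across $t$ so that Frobenius and trace normalizations are proportional. If that fails, I would state the result as holding for the trace-normalized surrogate (RV-type) and treat it as a first-order approximation to debiased CKA, in the same spirit as the RBF remark following Prop~\ref{prop:variance}.
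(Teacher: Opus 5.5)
Your route is the same as the paper's sketch. You treat linear CKA as a trace ratio (shared cross-model signal over total variance), let $\sigma_s^2$ cancel in $\delta_+$ and drop out of $\Delta\mathbf{a}$, and handle the cross-concept term through the cosine slack. Two of your departures improve on it.

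First, carrying the factor $(1-\cos)$ in both $\delta_+$ and $\delta_\Delta$ is what actually yields the displayed expression. The paper writes $\delta_+\propto\sigma_p^2/(\sigma_s^2+\sigma_p^2+\sigma^2)$ but $\delta_\Delta\propto 4\sigma_p^2(1-\eta)/(4\sigma_p^2(1+\eta)+2\sigma^2)$, so its ratio carries a stray factor $(1-\eta)$.

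Second, your objection to the final strict inequality is correct. The displayed bound exceeds $1$ iff $4\sigma_s^2+2\sigma^2>4\eta\sigma_p^2$, and $\sigma_s^2>0$, $\eta<1$ do not imply this. For example, $\sigma^2=0$, $\sigma_p^2=1$, $\sigma_s^2=0.1$, $\eta=0.5$ gives $4.4/6<1$. The paper's proof only asserts this step.

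The step that does not work is your justification of the normalizer $4\sigma_p^2(1+\eta)+2\sigma^2$. The CKA normalization for the cross pair $(\Delta\mathbf{A}_{t_1,A},\Delta\mathbf{A}_{t_2,B})$ is built from the two self-covariances separately, so $\|\mathbf{p}_{t_1}+\mathbf{p}_{t_2}\|^2$ never enters. Your bound also points the wrong way: an \emph{upper} bound on the cross-concept normalizer enlarges $\text{CKA}_{\text{cross}}$, which gives an upper rather than a lower bound on $\delta_\Delta$.

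The fix is to drop the step. Because the proposition uses one $\sigma_p^2$ for every concept, same- and cross-concept comparisons share the same normalizer: $4\sigma_p^2+2\sigma^2$ for differences and $\sigma_s^2+\sigma_p^2+\sigma^2$ for raw poles. In your trace surrogate this gives $\delta_\Delta/\delta_+=4(\sigma_s^2+\sigma_p^2+\sigma^2)/(4\sigma_p^2+2\sigma^2)$. This has a smaller denominator than the displayed bound, so the stated inequality holds \emph{a fortiori}. It also exceeds $1$ whenever $\sigma_s^2>0$. So the qualitative conclusion $\delta_\Delta>\delta_+$ survives without your extra regime assumption, even though the displayed middle expression need not exceed $1$.

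In this version, $\eta<1$ is used only to guarantee $1-\cos>0$, so that cancelling that factor is legitimate. Stating the result for the trace surrogate, as you propose, matches the regime the paper implicitly works in.
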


\begin{proof}
For CKA$_+$, both same- and cross-concept pairs share the $\sigma_s^2$ contribution in the numerator; their difference is $\delta_+ \propto \sigma_p^2/(\sigma_s^2 + \sigma_p^2 + \sigma^2)$. For $\ckad$, the shared component is eliminated: the same-concept numerator is $\propto 4\sigma_p^2$, while the cross-concept numerator under approximate orthogonality is $\propto 4\sigma_p^2 \eta$, giving $\delta_\Delta \propto 4\sigma_p^2 (1 - \eta)/(4\sigma_p^2(1+\eta) + 2\sigma^2)$. The ratio strictly exceeds 1 whenever $\sigma_s^2 > 0$ and $\eta < 1$. Our empirical $\eta = 0.62$ (sycophancy--agreeableness, App.~\ref{app:orthogonality}) easily satisfies $\eta < 1$, and the predicted-vs-observed gain match ($2.06\times$ vs $2.22\times$, within $8\%$) validates that the slack does not invalidate the bound.
\end{proof}

\begin{remark}[Independence assumption]
\label{rem:independence}
The independence assumption $\mathbf{s} \perp \mathbf{p} \perp \boldsymbol{\epsilon}$ is approximate; certain topics may differentially elicit concept effects, but the 45-domain prompt design dilutes this and empirical predictions match within $8\%$ ($2.06\times$ predicted vs.\ $2.22\times$ observed). Both a 1D closed-form and a $d{=}32$ $k{=}4$-subspace simulation confirm $\mathrm{Var}(\Delta a)$ and the same-vs-cross discrimination gap are $\rho$-invariant up to $\rho{=}0.7$ (Appendix~\ref{app:prop1_robustness}). See Limitations (\textbf{L7}).
\end{remark}

\paragraph{Empirical orthogonality.} The orthogonality assumption is verified empirically: mean off-diagonal cosine between trait directions is $0.054$ across the four primary models (range $0.038$--$0.079$); see Appendix~\ref{app:orthogonality} for full statistics.

Empirically, the persona-to-total variance ratio ranges from 38.9\% (Mistral) to 68.0\% (Gemma), cross-model mean 48.5\%, yielding a ${\sim}2\times$ SNR gain---consistent with the observed improvement from $d = 0.27$ to $d = 0.60$. Proposition~\ref{prop:discrim} predicts a discriminability gain of ${\sim}2.06\times$ from these ratios; the empirical $\delta_\Delta/\delta_+ = 2.22\times$ confirms the theoretical prediction. Concept-to-total variance ratios are lower for truthfulness (13.6\%) and formality (19.7\%), predicting even larger SNR gains ($5{-}11\times$) that compensate for the reduced concept signal---explaining why $\ckad$ remains effective despite lower absolute values for these domains.

\subsection{Cross-Model Classification Transfer}

To test \emph{functional} universality beyond geometric similarity, we train ridge-regularized logistic classifiers on persona polarity in each model's PCA-50 space ($50$ principal components of the contrastive-difference vectors, capturing 83--90\% of variance), then evaluate cross-model transfer. We consider three protocols: (1) \emph{direct transfer}---apply source-model classifier to target-model embeddings without alignment; (2) \emph{affine-aligned transfer}---learn a ridge-regularized affine map ($\mathbf{W} \in \mathbb{R}^{50 \times 50}$, $\mathbf{b} \in \mathbb{R}^{50}$; ${\sim}2{,}550$ parameters) from target to source space, then apply the source classifier; (3) \emph{contrastive-aligned transfer}---align using only contrastive mean directions. Controls include \emph{random-label} (shuffled target labels), \emph{cross-trait} (align on one trait, test on another), and \emph{random concept} (semantically incoherent contrastive pairs). All evaluations use 5-fold cross-validation with non-overlapping train/test splits for the alignment map.

\section{Experimental Setup}

\paragraph{Models.} Nine models spanning five architectural families (full configurations in Appendix~\ref{app:models}): Llama-3.1-8B-Instruct \citep{llama3}, Qwen-2.5-7B-Instruct \citep{qwen25}, Gemma-2-9B-IT \citep{gemma2}, Mistral-7B-Instruct-v0.3 \citep{mistral}, Phi-3.5-mini-instruct, Yi-1.5-6B-Chat, Llama-3.1-8B (base), and for cross-scale analysis, Llama-3.1-70B-Instruct and Qwen-2.5-72B-Instruct. The primary analysis uses the first four 7--9B instruct models (32L/$d{=}4096$ for Llama and Mistral; 28L/$d{=}3584$ for Qwen; 42L/$d{=}3584$ for Gemma); extended analysis includes all nine.

\paragraph{Personality dimensions.} Eight dimensions: Big Five (Extraversion, Agreeableness, Conscientiousness, Neuroticism, Openness) plus three alignment-relevant traits (Helpfulness, Sycophancy, Confidence). We initially considered \emph{verbosity} as a ninth dimension but excluded it after pilot experiments showed zero behavioral discriminability across the 4 primary models (Supp.~\S\ref{app:behavioral}). For each, 500 contrastive prompt pairs combine system-level persona instructions with diverse neutral questions (50 hand-crafted seeds $+$ 450 template-generated across 45 topical domains). Safety experiments use refusal-aligned vs.\ direct-answering contrastive prompts across six models.

\paragraph{Statistical methodology.} All hypothesis tests use two-sided Welch's $t$-test with both Bonferroni (conservative) and Benjamini--Hochberg FDR corrections reported in App.~\ref{app:multiple_comparison}; effect sizes report Cohen's $d$. Permutation testing ($n = 200$ shuffles, sufficient for single-hypothesis null calibration; SAE multi-test analysis uses $n = 1{,}000$ per App.~\ref{app:sae}) validates $\ckad$ significance.
Bootstrap 95\% CIs ($10{,}000$ resamples) confirm robustness; nonparametric confirmation via Mann--Whitney $U$ and Wilcoxon signed-rank tests.
We use the debiased HSIC estimator \citep{song2012feature} to remove the $O(1/N)$ bias that would otherwise inflate $\ckad$ at smaller $N$ (relevant for $N{=}200$ non-instruction experiments); the choice has no material effect at $N{=}500$.

\paragraph{Infrastructure and precision.} All experiments ran on $8{\times}$ NVIDIA V100-SXM2 16GB GPUs. Dtype: \texttt{float16} for Llama, Gemma, Mistral, Phi, Yi, and Llama-base; \texttt{bfloat16} for Qwen-7B and 70B/72B. Large models use \texttt{device\_map="auto"} tensor parallelism (HF Accelerate), batch size~8, and per-layer hook release. Total: ${\sim}480$ GPU-hours (${\sim}10$ min per model-pair$\times$concept). Seeds: \texttt{torch=numpy=42}.

\section{Results}

\subsection{$\ckad$ Isolates Concept-Specific Signal}

Table~\ref{tab:core} presents our core finding. Standard CKA on raw positive-pole activations (CKA$_+$) yields high same-trait similarity ($0.802$) but \emph{cannot} distinguish it from cross-trait mismatch ($0.780$; $p = 0.052$). $\ckad$ achieves significant discrimination under both linear ($d = 0.60$, $95\%$ CI $[0.29, 0.91]$, $p = 0.002$) and RBF ($d = 0.67$, $95\%$ CI $[0.36, 0.98]$, $p < 0.001$) kernels. Permutation testing ($n{=}200$) validates: real $\ckad = 0.727$ vs.\ null $= 0.689 \pm 0.005$ ($p < 0.005$; permutation $z = 7.4$). The Mann-Whitney $U$ test confirms nonparametrically ($U = 5144$, $p = 0.002$). A random concept control (Appendix~\ref{app:robustness}) with semantically incoherent contrastive pairs yields $d = 0.04$ ($p = 0.85$)---a $13.6\times$ contrast confirming concept-specificity. Trait universality varies $1.9\times$ (helpfulness $0.829$ vs.\ neuroticism $0.553$); Gemma-involving pairs show lower similarity ($0.696$ vs.\ $0.757$; $p < 0.05$). Full per-trait, per-pair, and baseline (raw CKA / SVCCA / Procrustes: $|d| \le 0.03$) results are in Apps.~\ref{app:cka},~\ref{app:pairwise},~\ref{app:baselines}. Figure~\ref{fig:heatmap} shows the trait $\times$ pair landscape (Gemma--Mistral is most divergent, $0.659$).

\begin{table}[t]
\centering
\caption{Same-trait vs.\ cross-trait CKA comparison ($N{=}500$, 4 primary models). Cells use the optimal layer $\ell^*$ per (model, trait) (\S\ref{sec:ckad}); the same quantity at the final layer is $0.733$ and the permutation test in the text uses that final-layer value $0.727$. Supp.~Table~\ref{tab:depth} shows same-trait $\ckad$ stays at $0.804$ across early/mid/late layers and drops to $0.733$ at the final layer, while the same-vs-cross gap \emph{grows} with depth ($0.014 \to 0.040 \to 0.048 \to 0.048$). $\ckad$ achieves significant discrimination; raw CKA$_+$ does not ($p{=}0.052$).}
\label{tab:core}
\small
\begin{tabular}{lccccc}
\toprule
\textbf{Metric} & \textbf{Same} & \textbf{Cross} & $t$ & $p$ & $d$ \\
\midrule
CKA$_+$ (raw) & $.802 \pm .085$ & $.780 \pm .083$ & 1.64 & .052 & 0.27 \\
\textbf{$\ckad$ (linear)} & $\mathbf{.804 \pm .080}$ & $\mathbf{.760 \pm .066}$ & \textbf{2.89} & $\mathbf{.002}$ & \textbf{0.60} \\
$\ckad$ (RBF) & $.868 \pm .052$ & $.836 \pm .044$ & 3.26 & ${<}.001$ & 0.67 \\
\bottomrule
\end{tabular}
\end{table}

\begin{figure}[t]
    \centering
    \includegraphics[width=\textwidth]{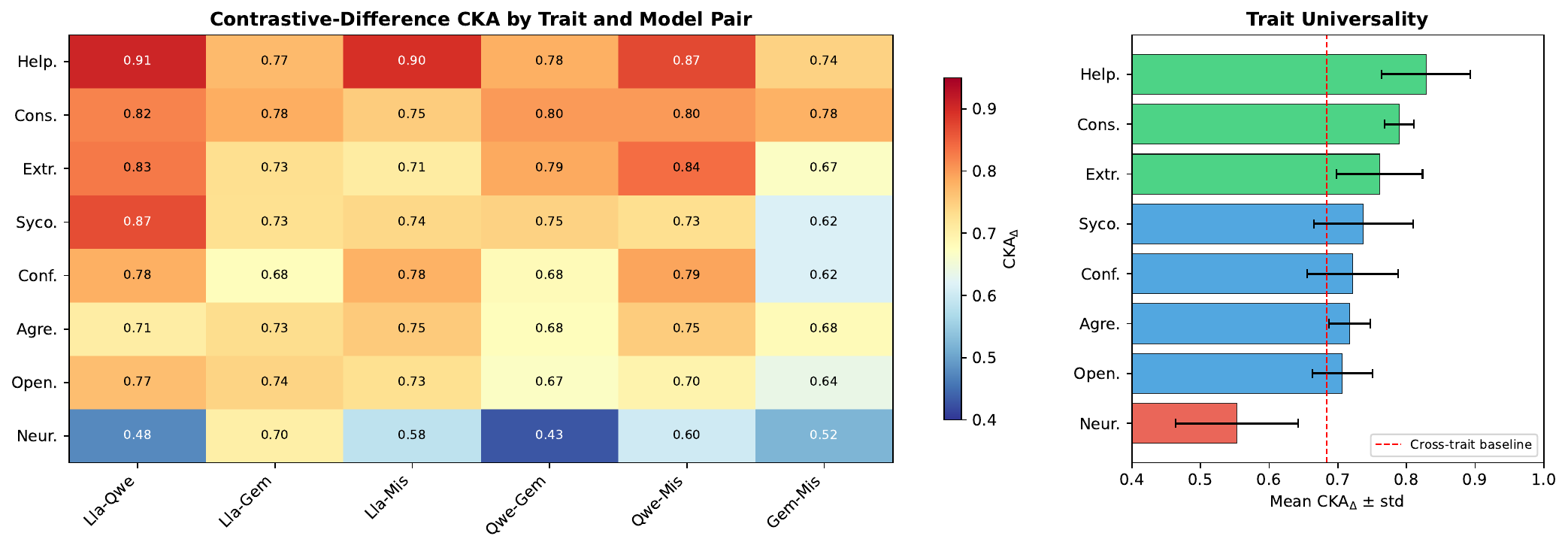}
    \caption{Left: $\ckad$ heatmap across 8 traits and 6 model pairs. Right: Per-trait universality (mean $\pm$ std); dashed line = cross-trait baseline. Trait universality ranges $1.9\times$ (helpfulness $0.829$ vs.\ neuroticism $0.553$).}
    \label{fig:heatmap}
\end{figure}

\subsection{Near-Perfect Cross-Model Classification Transfer}
\label{sec:classification}

Table~\ref{tab:classification} shows that affine-aligned persona classifiers achieve 99.9\% cross-model accuracy across all 96 transfer conditions ($8$ traits $\times$ $12$ directed model pairs), versus 51.3\% for direct transfer. The affine map has $50 \times 51 = 2{,}550$ parameters fit on $N{=}500$ contrastive-difference vectors per direction (with $50$-dim outputs, $25{,}000$ scalar constraints vs.\ $2{,}550$ parameters: ${\sim}10\times$ over-determined). Three controls confirm non-triviality: \textbf{random-label} ($50.4\% \pm 0.9\%$, $n = 480$), \textbf{cross-trait} ($70.7\% \pm 18.8\%$; Welch's $t = 10.71$, $p < 10^{-6}$), and \textbf{PCA dimension robustness} (96.2\% at $d{=}10$, 100\% at $d{=}100$; Appendix~\ref{app:pca}).

\begin{table}[t]
\centering
\caption{Cross-model persona classification transfer. Affine-aligned transfer achieves near-perfect accuracy; controls confirm persona-specific structure.}
\label{tab:classification}
\small
\begin{tabular}{lcc}
\toprule
\textbf{Method} & \textbf{Accuracy} & \textbf{$n$} \\
\midrule
Within-model (5-fold CV) & $0.996 \pm 0.014$ & 32 \\
\midrule
Direct transfer & $0.513 \pm 0.392$ & 96 \\
\textbf{Affine-aligned transfer} & $\mathbf{0.999 \pm 0.003}$ & 96 \\
Contrastive-aligned transfer & $0.736 \pm 0.195$ & 96 \\
\midrule
\multicolumn{3}{l}{\emph{Controls:}} \\
Random-label affine transfer & $0.504 \pm 0.009$ & 480 \\
Cross-trait alignment & $0.707 \pm 0.188$ & 336 \\
\bottomrule
\end{tabular}
\end{table}

Figure~\ref{fig:geom_func} reveals a striking geometric-functional dissociation: neuroticism has the lowest $\ckad$ ($0.553$) yet achieves $99.4\%$ affine transfer. Table~\ref{tab:per_trait} quantifies this across all eight traits---the Pearson correlation between $\ckad$ and affine accuracy is $r = 0.82$ ($n=96$ trait$\times$directed-pair cells, Fisher-$z$ 95\% CI $[0.74, 0.88]$), but ceiling effects at ${\sim}100\%$ (5/8 trait-level means at exactly $1.000$) mask the functional equivalence; the rank-based Spearman $\rho = 0.87$ ($n{=}8$ trait means, $p{=}0.005$) confirms the monotone relation. Persona-specific structure occupies a small fraction of total variance but is perfectly preserved up to affine transformation. The cross-trait control decomposes universality: ${\sim}40\%$ reflects generic alignment while ${\sim}60\%$ is trait-specific structure accessible only when alignment is trained on the matching dimension. SVD analysis of alignment maps (Appendix~\ref{app:svd_detail}) reveals moderate-rank structure (effective rank ${\sim}27$ of 50 dimensions), explaining why lightweight maps (${\sim}2{,}550$ parameters) suffice.

\begin{table}[t]
\centering
\caption{Per-trait affine classification transfer (mean across 12 directed pairs). Geometric similarity ($\ckad$) does not predict functional transfer difficulty.}
\label{tab:per_trait}
\small
\begin{tabular}{lcccc}
\toprule
\textbf{Trait} & \textbf{$\ckad$} & \textbf{Affine} & \textbf{Direct} & \textbf{Within} \\
\midrule
Helpfulness         & 0.829 & $1.000$ & 0.445 & 1.000 \\
Conscientiousness   & 0.789 & $1.000$ & 0.962 & 1.000 \\
Extraversion        & 0.761 & $1.000$ & 0.500 & 1.000 \\
Sycophancy          & 0.737 & $1.000$ & 0.337 & 1.000 \\
Confidence          & 0.722 & $1.000$ & 0.394 & 0.999 \\
Agreeableness       & 0.717 & $1.000$ & 0.417 & 0.999 \\
Openness            & 0.707 & $0.999$ & 0.428 & 0.993 \\
Neuroticism         & 0.553 & $0.994$ & 0.619 & 0.981 \\
\bottomrule
\end{tabular}
\end{table}

\begin{figure}[t]
    \centering
    \includegraphics[width=0.62\columnwidth]{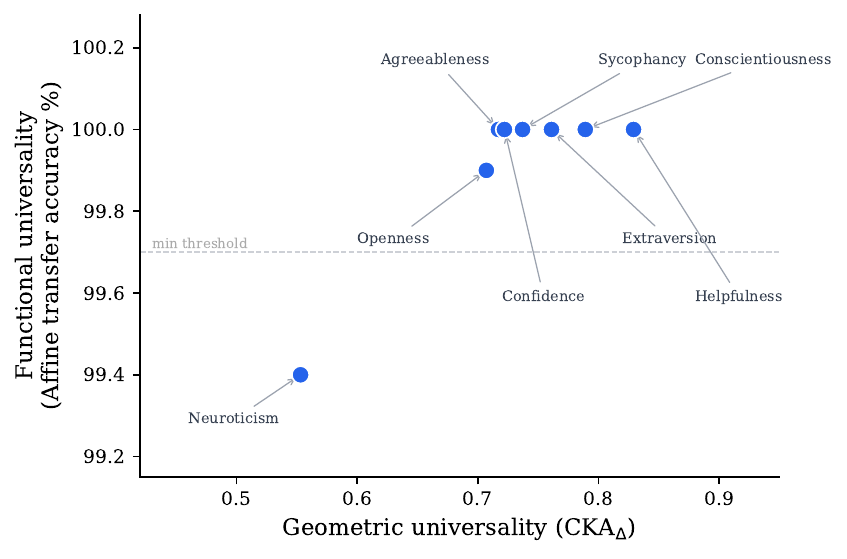}
    \caption{Geometric-functional dissociation: all traits achieve ${\geq}99.4\%$ affine transfer regardless of $\ckad$. Neuroticism (lowest $\ckad{=}0.553$) transfers nearly as well as helpfulness (highest $\ckad{=}0.829$).}
    \label{fig:geom_func}
\end{figure}

\subsection{Scale, Architecture, and Training Dependence}

We extend the analysis to nine models (Table~\ref{tab:extended}). $\ckad$ decreases with model diversity: the four primary 7--9B models show the strongest convergence ($0.733$); smaller models show reduced similarity ($0.34$--$0.47$). Despite this, \textbf{affine transfer remains $>$94\%} across all 42 directed model pairs, confirming functional universality at lower scale. Crucially, the base Llama-3.1-8B (\emph{no instruction tuning}) retains separable persona representations (within-model $98.0\%$, affine transfer $98.5\%$ to instruction-tuned models), demonstrating instruction tuning \emph{amplifies} rather than creates persona structure---this rules out the hypothesis that $\ckad$ merely measures instruction-following similarity.

We further report \textbf{an observational note based on a single 70B/72B pair} (Llama-70B$\leftrightarrow$Qwen-72B; Table~\ref{tab:extended}): the cross-family $\ckad = 0.830$ exceeds the 7--9B baseline ($0.733$), same-family cross-scale pairs show striking similarity (Llama-70B$\leftrightarrow$Llama-8B: $0.905$; Qwen-72B$\leftrightarrow$Qwen-7B: $0.819$), and affine transfer remains ${\geq}99.7\%$ across all cross-scale conditions---a pattern consistent with the Platonic Representation Hypothesis \citep{huh2024platonic}. We emphasize this is a single-pair observation: with $n{=}1$ at the cross-family 70B+ level, no inferential statistic is computable, and the observation cannot fully disentangle scale from training-data overlap. Establishing a scale effect requires replication with at least three ${\geq}70$B models; we present this finding as a hypothesis-generating observation rather than confirmed evidence. Depth-resolved analysis (Appendix~\ref{app:depth}) shows the same-vs-cross gap increases $3.4\times$ from early to final layers, confirming persona-discriminative structure emerges progressively. Pair-level tests across all 21 model pairs yield a \emph{larger} effect ($d = 0.78$, $p = 0.002$, Wilcoxon $p < 0.001$; Appendix~\ref{app:pairlevel}).

\begin{table}[t]
\centering
\caption{Extended model analysis across 9 models. $\ckad$ decreases with diversity but \emph{increases} at 70B scale; affine transfer remains $>$94\% universally. Column $n$ counts directed pairs; reported std is across the $8$ traits (so the 70B/72B group with 1 unique pair still has $n_{\rm trait}{=}8$ for the std).}
\label{tab:extended}
\small
\begin{tabular}{lccc}
\toprule
\textbf{Model group} & \textbf{$\ckad$ (same)} & \textbf{Affine transfer} & $n$ \\
\midrule
Orig-4 instruct (7--9B) & $.733 \pm .096$ & $.999 \pm .001$ & 12 \\
70B/72B & $.830 \pm .053$ & $1.000 \pm .000$ & 2 \\
70B $\leftrightarrow$ 7--9B & $.784 \pm .055$ & $1.000 \pm .001$ & 16 \\
\midrule
Smaller models (3.8--6B) & $.342$--$.468$ & $.947$--$.991$ & 18 \\
Base $\leftrightarrow$ instruct & $.507 \pm .089$ & $.985 \pm .018$ & 12 \\
\bottomrule
\end{tabular}
\end{table}


\subsection{Generalization to Safety, Truthfulness, and Formality}
\label{sec:safety}

To test whether $\ckad$ generalizes beyond personality, we apply it to three qualitatively different concept domains: \emph{refusal alignment} (safety-conscious vs.\ direct-answering behavior, six models), \emph{truthfulness} (honest vs.\ evasive responses, four models), and \emph{formality} (academic register vs.\ casual tone, four models).

\paragraph{Safety.} We extend the analysis to refusal alignment (safety-conscious vs.\ direct-answering behavior) across \textbf{six models} (Llama, Qwen, Gemma, Mistral, Phi-3.5, Yi-1.5), forming \textbf{15 directed pairs}. Same-concept $\ckad$ ($0.513 \pm 0.146$, bootstrap 95\% CI $[0.398, 0.628]$) trends higher than cross-concept comparisons ($d = 0.43$, $p = 0.08$---\textbf{suggestive but not statistically significant at $\alpha = 0.05$}). While geometric discrimination does not reach significance at this sample size ($n{=}15$ pairs), four independent lines of evidence converge on concept-specific structure: (1) affine-aligned classification achieves ${\geq}$99.9\% across all 30 directed pairs (random-label $59.4\%$); we caution this $40$-point gap is suggestive but not by itself concept-specific evidence, since random-concept controls also reach $\sim 100\%$ affine transfer (App.~\ref{app:robustness})---it must be read jointly with the geometric and SAE results; (2) safety alignment maps show the same moderate-rank SVD structure as persona maps (effective rank ${\sim}26$ vs.\ ${\sim}27$; Appendix~\ref{app:svd_detail}); (3) independent SAE analysis identifies 2,053 universal and 2,084 architecture-specific features (Appendix~\ref{app:sae}); (4) the 6-model subset shows consistent results ($\ckad = 0.534 \pm 0.15$). This converging-evidence pattern---moderate geometric convergence but near-perfect functional transfer (${\geq}$99.9\%)---exemplifies the geometric-functional dissociation that is the paper's central finding.

\paragraph{Truthfulness and formality.} Table~\ref{tab:concept_gen} summarizes. Both new concepts replicate the geometric-functional dissociation: truthfulness achieves $97.9\%$ affine transfer despite moderate $\ckad$ ($0.418$); formality shows the \emph{sharpest} dissociation---the lowest geometric convergence ($\ckad = 0.342$) yet near-perfect functional transfer ($99.9\%$). Random-label controls confirm non-triviality ($50.1\%$ and $49.5\%$ respectively). The Gemma anomaly replicates: Gemma-involving pairs show systematically lower $\ckad$ across all four instruction-level concept domains, consistent with its architectural distinctiveness (sliding-window attention, deeper network). Non-Gemma pairs achieve $\ckad = 0.629$ for truthfulness and $0.456$ for formality.

\paragraph{Encoding depth and transfer difficulty.} Direct transfer accuracy varies substantially across concept types (Table~\ref{tab:concept_gen}): truthfulness ($88.0\%$) and formality ($91.3\%$) far exceed personality ($51.3\%$), suggesting these concepts produce more token-level-consistent contrastive patterns across architectures, while personality modulations are subtler and require explicit alignment. Even for concepts with high direct transfer, affine alignment still improves accuracy (formality: $91.3\% \to 99.9\%$), confirming that $\ckad$ captures structure beyond what direct transfer exploits.

Within instruction-level concepts, $\ckad$ rank-orders direct transfer difficulty (Spearman $\rho = -1.0$, $n{=}4$, one-sided $p{\approx}1/24{\approx}0.042$; two-sided $p{\approx}2/24{\approx}0.083$ as the strongest available value at $n{=}4$, treated as a suggestive ordering rather than a tightly estimated correlation): concepts with higher $\ckad$ (deeper encoding) require explicit alignment, while surface-encoded concepts transfer directly. The two non-instruction concepts reveal a qualitatively different regime: despite having the \emph{lowest} $\ckad$ values ($0.281$--$0.300$), they also show low direct transfer ($46$--$47\%$), breaking the instruction-level negative correlation. This is expected: non-instruction concepts lack system-prompt-induced surface lexical markers entirely, so the ``surface channel'' that enables high direct transfer for formality ($91.3\%$) and truthfulness ($88.0\%$) is absent regardless of $\ckad$ magnitude. The result reveals two distinct encoding regimes---instruction-level concepts trade off between surface- and deep-encoding ($\rho{=}{-}1.0$), while non-instruction concepts are uniformly deep-encoded---establishing $\ckad$ as a diagnostic for \emph{concept encoding depth} within each regime (Appendix~\ref{app:transfer_pred}). The replication of the geometric-functional dissociation across all six concept domains---five with significant geometric discrimination ($p \leq 0.017$) and safety as a converging-functional trend ($p = 0.08$, see App.~\ref{app:multiple_comparison})---provides strong evidence that $\ckad$ captures a general property of cross-architecture concept encoding.

\paragraph{Non-instruction concepts.} To test whether $\ckad$ generalizes beyond instruction-level concepts, we apply it to two concepts that arise from pretraining with \textbf{no system prompt manipulation}: (1) \emph{code vs.\ natural language}---matched topic pairs differing only in whether the question requests code or a verbal explanation; (2) \emph{reasoning vs.\ recall}---matched pairs differing in whether the question requires logical deduction or factual retrieval. These experiments use $N{=}200$ prompts per concept (50 hand-crafted $+$ 150 template-generated), reduced from the $N{=}500$ used for personality due to the higher construction difficulty for non-instruction contrastive pairs, which must differ in cognitive demand (code vs.\ explanation, reasoning vs.\ recall) while controlling for topic and complexity. Subsample analysis confirms that $N{=}200$ provides stable $\ckad$ estimates (bootstrap std ${<}0.08$; Appendix~\ref{app:ablation}). Table~\ref{tab:concept_gen} (bottom rows) shows that the geometric-functional dissociation fully replicates: code-NL achieves $99.5\%$ affine transfer despite $\ckad = 0.300$; reasoning-recall achieves $94.0\%$ with $\ckad = 0.281$ (mean across 12 directed pairs; per-pair details in Appendix~\ref{app:emergent_detail}, Table~\ref{tab:reason_class}). Same-vs-cross concept discrimination is \emph{stronger} than for instruction-level concepts ($d = 1.8$ and $3.0$, $p \leq 0.017$), and random-label controls confirm non-triviality ($48.0\%$, $49.3\%$); per-pair results appear in Appendix~\ref{app:emergent_detail}.

The stronger discrimination arises because instruction-level concepts share system-prompt scaffolding, producing residual cross-concept $\ckad$ ($0.760$); non-instruction concepts share no such scaffolding, so cross-concept $\ckad$ approaches zero ($0.004$--$0.020$), naturally amplifying the same-vs-cross contrast (Table~\ref{tab:cross_concept}). This methodological insight confirms that $\ckad$ captures genuine concept structure, not shared formatting artifacts.

Notably, the per-pair transfer patterns are \emph{concept-dependent}: while instruction-level concepts show a ``Gemma anomaly'' (Gemma-involving pairs transfer worst), code-NL shows a ``Mistral anomaly'' where Mistral-involving pairs have the lowest direct transfer ($2$--$17\%$ vs.\ $98$--$99\%$ for Llama$\leftrightarrow$Qwen; Appendix~\ref{app:emergent_detail}). A mech-interp investigation of this concept-by-architecture interaction is left to future work. This concept-dependent architectural sensitivity---different architectures encode different concepts with varying fidelity---further validates that $\ckad$ captures concept-specific structure rather than a single generic factor.

\begin{table}[t]
\centering
\caption{Concept generalization across six domains. The geometric-functional dissociation replicates for five concepts with significant geometric discrimination; safety reaches the same functional pattern (${\geq}99.9\%$ affine) but with a non-significant geometric trend ($p = 0.08$, see App.~\ref{app:multiple_comparison}). Bottom two rows are non-instruction concepts (${N{=}200}$: 50 hand-crafted $+$ 150 template-generated pairs). $^\dagger$No system prompt used.}
\label{tab:concept_gen}
\small
\resizebox{\textwidth}{!}{%
\begin{tabular}{lccccc}
\toprule
\textbf{Concept} & \textbf{$\ckad$ (all)} & \textbf{$\ckad$ (non-Gemma)} & \textbf{Affine} & \textbf{Direct} & \textbf{Random} \\
\midrule
Personality (8 traits)  & $0.727 \pm 0.10$ & $0.757$ & $99.9\%$ & $51.3\%$ & $50.4\%$ \\
Safety (refusal, 15 pairs) & $0.513 \pm 0.15$ & $0.585$ & $99.9\%$ & $55.4\%$ & $59.4\%$ \\
Truthfulness            & $0.418 \pm 0.21$ & $0.629$ & $97.9\%$ & $88.0\%$ & $50.1\%$ \\
Formality               & $0.342 \pm 0.13$ & $0.456$ & $99.9\%$ & $91.3\%$ & $49.5\%$ \\
\midrule
Code vs.\ NL$^\dagger$        & $0.300 \pm 0.18$ & $0.428$ & $99.5\%$ & $46.3\%$ & $48.0\%$ \\
Reasoning vs.\ recall$^\dagger$ & $0.281 \pm 0.08$ & $0.331$ & $94.0\%$ & $47.1\%$ & $49.3\%$ \\
\bottomrule
\end{tabular}%
}
\end{table}

\section{Discussion and Conclusion}

\paragraph{Two levels of universality.}
The moderate geometric universality ($d = 0.60$--$0.78$) combined with near-perfect functional universality (${\geq}94.0\%$) reveals that concept-specific structure is well-preserved but occupies a small fraction of total representational variance, analogous to two maps of the same city in different coordinate systems. Our results are consistent with a \emph{weak version} of the Platonic Representation Hypothesis \citep{huh2024platonic}: different architectures show \emph{partial} convergence in concept representations rather than complete convergence. The strong version predicts full cross-architecture convergence; we observe moderate geometric convergence ($\ckad = 0.727$) with near-perfect functional equivalence (${\geq}94\%$ affine transfer). This geometric-functional separation means concept representations are universally \emph{recoverable} (via affine transformation) but not universally \emph{identical} (requiring learned alignment). Preliminary analysis of a single 70B/72B pair ($\ckad = 0.830$, Llama-70B$\leftrightarrow$Qwen-72B) suggests universality may strengthen with scale, though this finding requires replication with additional large-scale models to disentangle scale effects from training-data overlap. Since concept-specific structure modulates the next-token prediction distribution (e.g., an extraverted persona shifts probability mass toward enthusiastic completions), the functional universality implies architectures converge on similar \emph{concept-conditioned prediction strategies}. $\ckad$ complements USAEs \citep{thasarathan2025usae} and SAE stitching \citep{stolfo2025stitching}: those methods \emph{construct} shared feature vocabularies (what features are shared), while $\ckad$ \emph{diagnoses} concept-specificity (whether features carry concept-level alignment). Our cross-trait control ($70.7\%$) quantifies this---roughly $40\%$ of transfer success is generic alignment while $60\%$ is concept-specific.

\paragraph{Methodological validity.}
A natural concern is that contrastive differencing with identical persona instructions merely measures instruction-following convergence. Five controls refute this: (1)~the base Llama-3.1-8B (\emph{no instruction tuning}) retains separable persona representations ($98.0\%$ within-model, $98.5\%$ affine transfer); (2)~cross-trait alignment ($70.7\%$) confirms trait-specificity; (3)~random-label control ($50.4\%$) rules out memorization; (4)~random concept control ($d = 0.04$, $p = 0.85$; Appendix~\ref{app:robustness}) shows zero discrimination for semantically incoherent pairs; (5)~cross-model LLM-as-judge validation (Appendix~\ref{app:behavioral}) confirms persona effects with large effect sizes ($d = 3.93$). We further note BFI-44 forced-choice administration produced \emph{zero} mean Likert drift (Supp.~\S\ref{app:behavioral}); this null indicates $\ckad$ tracks structure expressed in open-ended generation, not in constrained self-reports---a representation--behavior dissociation (L11), not evidence against persona representations (since cross-model judges \emph{do} detect the structure in open text).

\paragraph{Concept-dependent transfer characteristics.}
Variance decomposition reveals that concept-specific structure occupies varying fractions of total representational variance: personality ($48.5\%$), code-NL ($54.0\%$), reasoning-recall ($46.3\%$), formality ($19.7\%$), truthfulness ($13.6\%$)---yet these within-model variance fractions do not predict cross-model $\ckad$ (code-NL occupies $54.0\%$ of variance but $\ckad = 0.300$ vs.\ personality's $48.5\%$ with $\ckad = 0.727$), confirming that within-model variance and cross-model alignment measure distinct properties (Appendix~\ref{app:transfer_pred}). The safety random-label control ($59.4\%$) exceeds the ${\sim}50\%$ baseline seen for other domains, likely reflecting the sharp binary nature of safety refusal; the $59.4\%$ nonetheless remains far below $99.9\%$ correct-label accuracy.

\paragraph{$\ckad$ as a distributional diagnostic.}
$\ckad$ operates on the \emph{distributional} level---the full $N \times d$ contrastive-difference matrix---rather than on mean directions, capturing higher-order structure (covariance patterns) invisible to direction-level methods \citep{huang2025crossmodel,oozeer2025activation}. The mean-difference cosine baseline ($d = 0.007$, $p = 0.98$) and SVCCA ($0.874$ mean, no same-vs-cross discrimination) confirm this unique capability. Following \citet{venkatesh2026nonidentifiability}'s caution on steering-vector identifiability, we note the learned affine map $(\mathbf{W}, \mathbf{b}) \in \mathbb{R}^{50 \times 51}$ is identifiable up to the null space of the PCA-50 inputs; since PCA-50 captures $83$--$90\%$ of contrastive variance (App.~\ref{app:pca}), residual null-space contribution to cross-model transfer is bounded by $10$--$17\%$. The reported $99.9\%$ affine accuracy thus reflects the identifiable component and cannot be inflated by the unidentifiable residual. Practically, the diagnostic requires ${\sim}10$ minutes per model pair on a single GPU, with no SAE training, feature matching, or iterative optimization. A natural question is: since affine transfer achieves ${\geq}94\%$ across all conditions, why is $\ckad$ needed? The answer is that $\ckad$ provides information \emph{before} investing in labeled-data alignment: it identifies which architecture pairs are outliers (Gemma-involving pairs show systematically lower $\ckad$ across all six concept domains, $d = 1.08$, AUC $= 0.79$; Appendix~\ref{app:transfer_pred}) and diagnoses concept encoding depth---information that determines whether the ${\sim}200$ labeled pairs and alignment-map training required for affine transfer can be skipped entirely. For instruction-level concepts, low $\ckad$ reliably signals that \emph{direct} transfer suffices (formality: $\ckad = 0.342$, direct $91.3\%$), whereas high $\ckad$ signals the need for alignment investment (personality: $\ckad = 0.727$, direct $51.3\%$).

\paragraph{Practical implications for deployment.} The geometric-functional dissociation yields three actionable insights for organizations managing heterogeneous model portfolios: \textbf{(1) Transfer triage}---compute $\ckad$ on unlabeled contrastive prompts (${\sim}10$ min/pair); high $\ckad$ + instruction-level concept signals invest in affine alignment (${\sim}2{,}550$ params, ${\sim}200$ labeled pairs); low $\ckad$ + above-threshold direct accuracy lets us skip alignment. Use $\geq 30$ topical domains, $\geq 200$ pairs, focus on \emph{relative} discrimination. \textbf{(2) Alignment auditing}---a drop in same-concept $\ckad$ between pre/post fine-tuning signals concept-encoding distortion. \textbf{(3) Architecture selection}---$\ckad$ identifies the architecture pair with least alignment overhead per concept (Gemma is the personality outlier; Mistral is the code-NL outlier).

\paragraph{Limitations and future work.}
We enumerate the principal limitations in compressed form here; full discussion (with cross-references to relevant supplementary sections) is in Supplementary~\S A.
\emph{(L1)~Concept coverage}: six domains (four instruction-level, two non-instruction); extending to syntactic/world-knowledge features further tests generality.
\emph{(L2)~Question-set sensitivity}: absolute $\ckad$ varies with topical coverage ($0.727 \to 0.366$ on a narrower set; $99.7\%$ content-driven). Sensitivity is in \emph{absolute} values, not in same-vs-cross discrimination---we therefore recommend $\ckad$ for relative comparisons / outlier detection / regime classification, with $\geq 30$ domains, $\geq 200$ pairs. The discrimination \emph{strength} (Cohen's $d$) may also vary with prompt suite breadth, so per-suite $d$ values should be reported alongside the absolute $\ckad$ (controlled-comparison analysis: App.~\ref{app:question_robustness}).
\emph{(L3)~Scale evidence is observational ($n{=}1$)}: the 70B finding rests on a single Llama-70B$\leftrightarrow$Qwen-72B pair; replication with $\geq 3$ models at $\geq 70$B is required.
\emph{(L4)~Safety geometric discrimination not significant}: $d=0.43$, $p=0.08$, $n=15$. Concept-specificity for safety rests on converging functional ($99.9\%$ affine), SVD ($\sim 26$ effective rank), and SAE (2{,}053 universal features) evidence. Post-hoc power analysis confirms this is an underpowered design rather than evidence of no effect (achieved power $0.312$; required $n_{\text{same}}{=}54$ at $4{:}1$ ratio for $80\%$ power; projected $p{\approx}0.044$ at $n_{\text{same}}{=}28$; App.~\ref{app:safety_power}).
\emph{(L5--L7)}: cross-lingual gradient (French $0.812$, Chinese $0.593$; possibly confounded with training-data composition, App.~\ref{app:robustness}); steering requires per-architecture calibration (Gemma $29.8\times$ lower divergence); the independence assumption $\mathbf{s}\perp\mathbf{p}$ in Proposition~\ref{prop:variance} is approximate.
\emph{(L8)~Predictive scope of $\ckad$}: LOO cross-validation ($n{=}30$) shows $\ckad$ does \emph{not} predict absolute direct-transfer accuracy across concepts ($R^2{=}{-}0.14$), but reliably detects architectural outliers ($d{=}1.08$, AUC $0.79$) and rank-orders pairs within concepts (truthfulness $\rho{=}0.88$); $\ckad$ is a regime classifier and outlier detector, not an absolute transfer predictor.
\emph{(L9--L10)}: practical deployment requires standardized prompt suites; extension to ${>}100$B replications and to multimodal joint architectures (e.g., VLM cross-modal alignment) is open. \textbf{Cross-modal evidence:} a controlled pilot on three ViT-class vision encoders ($\times 5$ CIFAR-100 polar concepts) recovers the same pattern in non-language modality (same-concept $\ckad{=}0.560$ vs.\ cross-concept $\ckad{=}{-}0.004$, $d{=}7.34$, $p{<}10^{-10}$; App.~\ref{app:vision_generalization}).
\emph{(L11--L12)~Personality-trait subtleties}: BFI-44 forced-choice administration yields zero mean Likert drift---$\ckad$ tracks generative-behavior structure, not self-report (Supp.~\S\ref{app:behavioral}); within personality, conscientiousness shows unusually high direct transfer ($96.2\%$) vs.\ the other 7 traits ($33$--$62\%$), but the Spearman ranking is robust to this outlier under the \emph{median} ($43.7\%$) as well as the \emph{mean} ($51.3\%$) (Table~\ref{tab:per_trait}).

In summary, we systematically characterize a geometric-functional universality dissociation across LLM architectures: moderate geometric convergence coexists with near-perfect functional transfer (${\geq}$94.0\%) across six concept domains---five established with $p \leq 0.017$ geometric discrimination, and safety with converging functional, SVD, and SAE evidence ($p = 0.08$ geometric trend)---and five architectural families. Contrastive-difference CKA ($\ckad$), the diagnostic metric enabling this characterization, isolates concept-specific convergence from generic similarity with provably improved signal-to-noise ratio. Formality provides the sharpest demonstration of the dissociation ($\ckad = 0.342$ yet $99.9\%$ affine transfer). A single 70B/72B pair ($\ckad = 0.830$) provides suggestive evidence of scale effects, though replication with additional large-scale models is needed. Non-instruction concepts show even stronger same-vs-cross discrimination ($d = 1.8$--$3.0$) than instruction-level concepts ($d = 0.60$), confirming that $\ckad$ captures structural universality arising from pretraining, not prompt engineering. Rigorous controls---random concepts ($d = 0.04$ on the \emph{geometric} channel), cross-trait alignment ($70.7\%$ vs.\ $99.9\%$ same-trait, indicating ${\sim}60\%$ trait-specific structure), random-label baselines (${\sim}50\%$)---establish concept-specificity primarily through the geometric channel for five domains, with safety relying on the converging functional+SVD+SAE pattern. These results establish that concept-level structure is functionally universal across architectures, with $\ckad$ providing a practical, training-free diagnostic for scalable alignment monitoring.

{\footnotesize
\setlength{\bibsep}{1pt plus 0.5pt minus 0.5pt}
\bibliography{references}

@inproceedings{marks2024geometry,
  title={The Geometry of Truth: Emergent Linear Structure in Large Language Model Representations of True/False Datasets},
  author={Marks, Samuel and Tegmark, Max},
  booktitle={ICLR},
  year={2024},
  note={arXiv:2310.06824}
}

@inproceedings{nanda2023progress,
  title={Progress Measures for Grokking via Mechanistic Interpretability},
  author={Nanda, Neel and Chan, Lawrence and Lieberum, Tom and Smith, Jess and Steinhardt, Jacob},
  booktitle={ICLR},
  year={2023}
}

@article{park2023linear,
  title={The Linear Representation Hypothesis and the Geometry of Large Language Models},
  author={Park, Kiho and Choe, Yo Joong and Veitch, Victor},
  journal={arXiv preprint arXiv:2311.03658},
  year={2023}
}

@article{elhage2022toy,
  title={Toy Models of Superposition},
  author={Elhage, Nelson and Hume, Tristan and Olsson, Catherine and Schiefer, Nicholas and Henighan, Tom and Kravec, Shauna and Hatfield-Dodds, Zac and Lasenby, Robert and Drain, Dawn and Chen, Carol and others},
  journal={Transformer Circuits Thread},
  year={2022},
  url={https://transformer-circuits.pub/2022/toy_model/index.html}
}

@inproceedings{burns2023discovering,
  title={Discovering Latent Knowledge in Language Models Without Supervision},
  author={Burns, Collin and Ye, Haotian and Klein, Dan and Steinhardt, Jacob},
  booktitle={ICLR},
  year={2023}
}

@article{tigges2023linear,
  title={Linear Representations of Sentiment in Large Language Models},
  author={Tigges, Curt and Hollinsworth, Oskar John and Geiger, Atticus and Nanda, Neel},
  journal={arXiv preprint arXiv:2310.15154},
  year={2023}
}

@inproceedings{kornblith2019similarity,
  title={Similarity of Neural Network Representations Revisited},
  author={Kornblith, Simon and Norouzi, Mohammad and Lee, Honglak and Hinton, Geoffrey},
  booktitle={ICML},
  year={2019}
}

@inproceedings{raghu2017svcca,
  title={{SVCCA}: Singular Vector Canonical Correlation Analysis for Deep Learning Dynamics and Interpretability},
  author={Raghu, Maithra and Gilmer, Justin and Yosinski, Jason and Sohl-Dickstein, Jascha},
  booktitle={NeurIPS},
  year={2017}
}

@article{llama3,
  title={The {Llama} 3 Herd of Models},
  author={Grattafiori, Aaron and Dubey, Abhimanyu and Jauhri, Abhinav and Pandey, Abhinav and Kadian, Abhishek and Al-Dahle, Ahmad and Letman, Aieleen and Mathur, Akhil and Schelten, Alan and others},
  journal={arXiv preprint arXiv:2407.21783},
  year={2024}
}

@article{qwen25,
  title={{Qwen2.5} Technical Report},
  author={Yang, An and Yang, Baosong and Zhang, Beichen and Hui, Binyuan and Zheng, Bo and Yu, Bowen and Li, Chengyuan and Liu, Dayiheng and Huang, Fei and others},
  journal={arXiv preprint arXiv:2412.15115},
  year={2024}
}

@article{gemma2,
  title={{Gemma} 2: Improving Open Language Models at a Practical Size},
  author={{Gemma Team}},
  journal={arXiv preprint arXiv:2408.00118},
  year={2024}
}

@article{mistral,
  title={Mistral {7B}},
  author={Jiang, Albert Q. and Sablayrolles, Alexandre and Mensch, Arthur and Bamford, Chris and Chaplot, Devendra Singh and de las Casas, Diego and Bressand, Florian and Lengyel, Gianna and Lample, Guillaume and Saulnier, Lucile and others},
  journal={arXiv preprint arXiv:2310.06825},
  year={2023}
}

@inproceedings{huang2025crossmodel,
  title={Cross-model Transferability among Large Language Models on the Platonic Representations of Concepts},
  author={Huang, Youcheng and Feng, Wenqiang and Chen, Chen and Tan, Huiming and Wen, Haobo and Jin, Qingsong},
  booktitle={ACL},
  year={2025},
  note={arXiv:2501.02009}
}

@inproceedings{stolfo2025stitching,
  title={Transferring Linear Features Across Language Models With Model Stitching},
  author={Stolfo, Alessandro and Rauker, Tilman and Gurnee, Wes and Nanda, Neel},
  booktitle={NeurIPS},
  year={2025},
  note={arXiv:2506.06609}
}

@article{chen2025persona,
  title={Persona Vectors: Monitoring and Controlling Character Traits in Language Models},
  author={Chen, Mack Yan-Lun and Arditi, Andy and Sleight, Henry and Evans, Owain and Lindsey, Jack},
  journal={arXiv preprint arXiv:2507.21509},
  year={2025}
}

@article{huh2024platonic,
  title={The Platonic Representation Hypothesis},
  author={Huh, Minyoung and Cheung, Brian and Wang, Tongzhou and Isola, Phillip},
  journal={arXiv preprint arXiv:2405.07987},
  year={2024}
}

@inproceedings{wang2025universality,
  title={Towards Universality: Studying Mechanistic Similarity Across Language Model Architectures},
  author={Wang, Junxuan and Ge, Xuyang and Shu, Wentao and Tang, Qiong and Zhou, Yunhua and He, Zhengfu and Qiu, Xipeng},
  booktitle={ICLR},
  year={2025},
  note={arXiv:2410.06672}
}

@inproceedings{oozeer2025activation,
  title={Activation Space Interventions Can Be Transferred Between Large Language Models},
  author={Oozeer, Narmeen and Nathawani, Dhruv and Prakash, Nirmalendu and Lan, Michael and Harrasse, Abir and Abdullah, Amirali},
  booktitle={ICML},
  year={2025},
  note={arXiv:2503.04429}
}

@inproceedings{thasarathan2025usae,
  title={Universal Sparse Autoencoders: Interpretable Cross-Model Concept Alignment},
  author={Thasarathan, Harrish and Forsyth, Julian and Fel, Thomas and Kowal, Matthew and Derpanis, Konstantinos},
  booktitle={ICML},
  year={2025}
}

@inproceedings{ju2025probing,
  title={Probing then Editing Response Personality of Large Language Models},
  author={Ju, Tianjie and others},
  booktitle={COLM},
  year={2025},
  note={arXiv:2504.10227}
}

@inproceedings{gupta2025role,
  title={Can Role Vectors Affect {LLM} Behaviour?},
  author={Gupta, Rishav and others},
  booktitle={Findings of EMNLP},
  year={2025}
}

@article{venkatesh2026nonidentifiability,
  title={On the Non-Identifiability of Steering Vectors in Large Language Models},
  author={Venkatesh, Sohan and Kurapath, Ashish Mahendran},
  journal={arXiv preprint arXiv:2602.06801},
  year={2026}
}

@article{hoppe2026sas,
  title={Controllable and Explainable Personality Sliders for {LLMs} at Inference Time},
  author={Hoppe, Florian and Khachaturov, David and Mullins, Robert and Meng, Mark Huasong},
  journal={arXiv preprint arXiv:2603.03326},
  year={2026}
}

@article{cristofano2026universal,
  title={Universal Refusal Circuits Across {LLMs}: Cross-Model Transfer via Trajectory Replay and Concept-Basis Reconstruction},
  author={Cristofano, Tony},
  journal={arXiv preprint arXiv:2601.16034},
  year={2026}
}

@article{song2012feature,
  title={Feature Selection via Dependence Maximization},
  author={Song, Le and Smola, Alex and Gretton, Arthur and Bedo, Justin and Borgwardt, Karsten},
  journal={Journal of Machine Learning Research},
  volume={13},
  pages={1393--1434},
  year={2012}
}

@inproceedings{williams2022shape,
  title={Generalized Shape Metrics on Neural Representations},
  author={Williams, Alex H. and Kunz, Erin and Kornblith, Simon and Linderman, Scott},
  booktitle={NeurIPS},
  year={2021}
}

@article{kriegeskorte2008rsa,
  title={Representational similarity analysis---connecting the branches of systems neuroscience},
  author={Kriegeskorte, Nikolaus and Mur, Marieke and Bandettini, Peter},
  journal={Frontiers in Systems Neuroscience},
  volume={2},
  pages={4},
  year={2008},
  doi={10.3389/neuro.06.004.2008}
}

@inproceedings{gretton2005hsic,
  title={Measuring statistical dependence with {Hilbert-Schmidt} norms},
  author={Gretton, Arthur and Bousquet, Olivier and Smola, Alex and Sch{\"o}lkopf, Bernhard},
  booktitle={Algorithmic Learning Theory ({ALT})},
  year={2005}
}
}
\end{document}